%% file: neurips_2025.tex
\documentclass{article}

\usepackage[preprint]{neurips_2026}

\usepackage{amsthm}
\newtheorem{proposition}{Proposition}

\usepackage{amsmath, amssymb, amsfonts}
\usepackage{wrapfig} 
\usepackage{booktabs}    
\usepackage{multirow}    
\usepackage{enumitem}
\usepackage[]{mdframed}
\usepackage{framed}
\usepackage{booktabs}
\usepackage{multirow}
\usepackage{dsfont}
\usepackage{makecell}
\usepackage{enumitem, enumerate}
\usepackage{tikz}
\usepackage{booktabs}      
\usepackage{xcolor}        
\usepackage{threeparttable}
\usetikzlibrary{arrows.meta}
\usepackage{amsmath, amssymb}
\usepackage[noend]{algpseudocode}
\usepackage{algorithm}
\usepackage{algorithmicx}
\usepackage{algpseudocode}
\usepackage[most]{tcolorbox}
\tcbuselibrary{breakable, listings}
\usepackage{listings}
\usepackage{colortbl}
\usepackage{float, caption} 

\usepackage{algorithm}
\usepackage{algpseudocode}

\usepackage[utf8]{inputenc} 
\usepackage[T1]{fontenc}    
\usepackage{hyperref}       
\usepackage{url}            
\usepackage{booktabs}       
\usepackage{amsfonts}       
\usepackage{nicefrac}       
\usepackage{microtype}      
\usepackage{xcolor}         
\definecolor{lightgray}{gray}{0.9}

\usepackage{microtype}
\usepackage{hyperref}
\usepackage[capitalize,noabbrev,nameinlink]{cleveref}
\usepackage{url}

\tcbuselibrary{listings,breakable}
\usepackage{graphicx}

\definecolor{ForestGreen}{RGB}{34,139,34}
\definecolor{Crimson}{RGB}{220,20,60}
\usepackage{lineno}
\usepackage{soul} 
\usepackage{xspace}
\usepackage{enumerate}
\newcommand{\piroll}{\pi_{\text{rollout}}}
\newcommand{\pitrain}{\pi_{\text{train}}}
\title{AIS: Adaptive Importance Sampling for Quantized RL}

\author{Jiajun Zhou$^{1,2}$\thanks{Equal contribution.}   \thanks{Work done during internship at Huawei.} , Wei Shao$^1$\footnotemark[1] ,  Lingchao Zheng$^1$ ,  Yuwei Fan$^1$, Ngai Wong$^{2}$  \\
$^1$ Huawei 
$^2$ The University of Hong Kong
}

\begin{document}

\maketitle

\begin{abstract}

Reinforcement learning (RL) for large language models (LLMs) is dominated by the cost of rollout generation, which has motivated the use of low-precision rollouts (e.g., FP8) paired with a BF16 trainer to improve throughput and reduce memory pressure. This introduces a \emph{rollout--training mismatch} that biases the policy gradient and can cause training to collapse outright on reasoning benchmarks. We show that the mismatch is non-stationary and acts as a double-edged sword: early in training it provides a stochastic exploration bonus, exposing the gradient to trajectories the trainer would otherwise under-sample, but the same perturbation transitions into a destabilizing source of bias as the policy concentrates.

To solve this, we propose \textbf{Adaptive Importance Sampling (AIS)}, a correction framework that adjusts the strength of its intervention on a per-batch basis. AIS combines three real-time diagnostics, \emph{weight reliability}, \emph{divergence severity}, and \emph{variance amplification}, into a single mixing coefficient that interpolates between the uncorrected and fully importance-weighted gradients, suppressing the destabilizing component of the mismatch while preserving its exploratory benefit. We integrate AIS into GRPO and evaluate it on the diffusion-based LLaDA-8B-Instruct and the autoregressive Qwen3-8B and Qwen3.5-9B across mathematical reasoning and planning benchmarks. AIS matches the BF16 baseline on most tasks while retaining the $1.5$ to $2.76\times$ rollout speedup of FP8.

\end{abstract}

\input{sections/intro}
\input{sections/method_new}
\input{sections/experiments}

\newpage
\bibliographystyle{plainnat}
\bibliography{citation}
\clearpage


\appendix
\input{sections/appendix}



\end{document}

%% file: sections/intro.tex
\section{Introduction}

Reinforcement learning (RL) has become a dominant post-training paradigm for large language models (LLMs), driving recent advances in reasoning~\citep{yue2025vapo}, code generation~\citep{deepcoder2025, liu2025code}, and mathematics~\citep{deepscaler2025, yu2025dapo} through optimization against verifiable reward signals~\citep{guo2025deepseek, team2025kimi}. A single training step interleaves three stages: trajectory rollout, log-probability evaluation, and a policy gradient update. Among these, the rollout stage dominates both wall-clock time and memory, accounting for up to $70\%$ of end-to-end latency~\citep{zheng2024sglang, he2025history, zheng2025act}, with the imbalance further amplified on reasoning tasks that demand long chain-of-thought traces.

To accelerate this bottleneck, the recent large-scale industry has increasingly adopted low-precision quantization at the rollout stage. DeepSeek V3.2 integrates FP8 attention into its post-training pipeline~\citep{deepseekv32}, and DeepSeek V4 pushes rollout precision further down to FP4~\citep{deepseekai2026deepseekv4}. Open-source frameworks such as veRL~\citep{sheng2024hybridflow} provide production-level FP8 rollout pipelines for GRPO training on Qwen3-class models~\citep{nvidia2026fp8rl, qiu2026fp8}. Across these reports, FP8 rollout consistently delivers $1.2$ to $1.5\times$ speedup and roughly halves rollout memory, both directly through faster low-precision kernels and indirectly through the larger batches a smaller memory footprint allows.

Standard RL algorithms assume trajectories are sampled from the current policy being optimized. When a quantized model (FP8/INT8) generates the rollout while the trainer stays at higher precision (BF16), the resulting precision gap introduces a \emph{rollout-training mismatch}~\citep{qiu2026fp8} that biases every gradient estimate. Because the policy gradient sums over token-level log-probability terms, small per-token discrepancies compound across long generation horizons, and on some reasoning benchmarks, the resulting bias is severe enough to collapse training accuracy outright: FP8 rollout drops AIME25 accuracy on Qwen3-8B by $6.63\%$.

Prior research has sought to bypass the mismatch problem by enforcing uniform precision across both rollout and training \citep{sglang2025fp8}. While this avoids distributional shift, it sacrifices the numerical integrity of the trainer. We instead embrace the common production paradigm of BF16 training with FP8 rollout, treating mismatch as an unavoidable yet manageable reality. Standard Importance Sampling (IS) \citep{pmlr-v97-hanna19a, yao2025flashrl} remains the primary defense for bias reduction, yet it often falls short in variance control, resulting in erratic training trajectories.

\begin{figure}[t]
    \centering
    \includegraphics[width=0.7\linewidth]{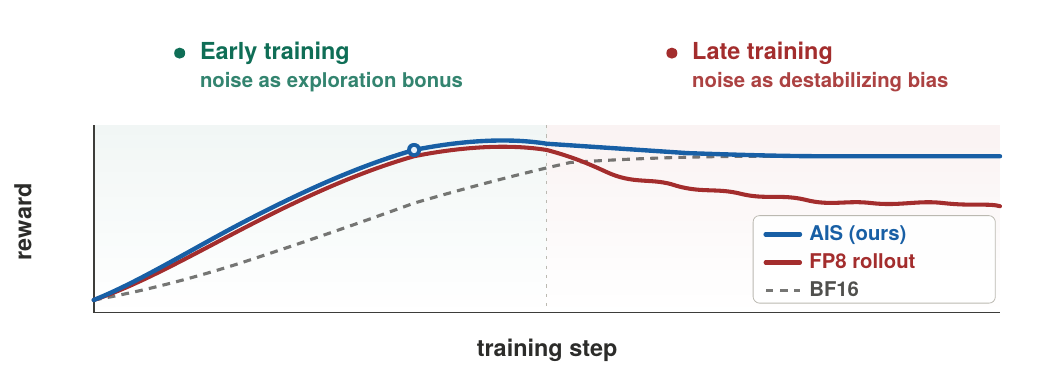}
    \caption{
        Reward trajectory of Qwen3-8B on GSM8K under FP8 rollout.
    }
    \label{fig:dual-role-mismatch}
    \vspace{-12pt}
\end{figure}

Crucially, we challenge the conventional wisdom that frames this mismatch solely as an adversary. As demonstrated in Figure~\ref{fig:dual-role-mismatch}, quantization mismatch operates as a double-edged sword: early in training, it serves as a stochastically driven exploration bonus, exposing the gradient to a broader behavioral set and preventing early stagnation. Yet, as training converges, the same noise devolves into a source of instability. This dual nature suggests that the challenge is not to eradicate the mismatch, but to govern it. The appropriate response is therefore not to eliminate the mismatch but to manage it, suppressing its destabilizing component while preserving its useful one, a balance that no fixed-threshold rule can sustain throughout training.

We propose Adaptive Importance Sampling (AIS), which targets exactly this balance. At each gradient step, AIS computes three real-time diagnostics from quantities the trainer already evaluates---\emph{weight reliability}, \emph{divergence severity}, and \emph{variance amplification}---and combines them into a single mixing coefficient that interpolates between the uncorrected gradient and a fully importance-weighted one. This coefficient tightens correction when the mismatch becomes destabilizing and relaxes it when the mismatch is benign, suppressing harmful variance while preserving the exploratory signal that quantized rollouts can provide. The mechanism reduces to standard training as the mismatch reduces, and adds negligible overhead because all three diagnostics are scalar reductions over statistics already computed during the policy update.

We integrate AIS into GRPO~\citep{shao2024deepseekmath} and evaluate on LLaDA-8B-Instruct~\citep{nie2025large}, Qwen3-8B, and Qwen3.5-9B~\citep{yang2025qwen3} across mathematical reasoning, planning, and general benchmarks. To our knowledge, this is the first study of low-precision rollout for RL on diffusion LLMs. AIS recovers most of the accuracy gap to BF16 across reasoning benchmarks for both architectures, substantially outperforming truncation sampling, while matching or improving over the BF16 reference on most tasks and retaining the $1.5$ to $2.76\times$ rollout speedup and roughly $50\%$ memory reduction of FP8.

%% file: sections/method_new.tex
\section{Preliminaries}
\newcommand{\oldtheta}{\theta_{\mathrm{old}}}
\newcommand{\qoldtheta}{\hat{\theta}_{\mathrm{old}}}

\subsection{LLM Quantization}
\label{sec:prelim_quantization}

Quantization reduces the numerical precision of model parameters and activations to enable faster inference and lower memory consumption. Given a full-precision tensor $\mathbf{X} \in \mathbb{R}^{n}$, quantization maps each element to a discrete set of representable values. For symmetric linear quantization to $b$ bits, the quantized representation is:
\begin{equation}
    Q(\mathbf{X}) = \text{clamp}\!\left(\left\lfloor \frac{\mathbf{X}}{s} \right\rceil,\; -2^{b-1},\; 2^{b-1}-1\right) \cdot s, \quad s = \frac{\max(|\mathbf{X}|)}{2^{b-1}-1},
    \label{eq:quantization}
\end{equation}
where $s$ is the scaling factor and $\lfloor \cdot \rceil$ denotes rounding to the nearest integer. The quantization error $\boldsymbol{\epsilon} = Q(\mathbf{X}) - \mathbf{X}$ introduces perturbations that propagate through the network's forward pass.
 
In this work, we focus on FP8 (E4M3) quantization~\citep{micikevicius2022fp8}, which represents values using 4 exponent bits and 3 mantissa bits. Unlike integer quantization, FP8 provides a wider dynamic range at the cost of reduced precision in the mantissa, making it particularly suitable for the diverse value distributions encountered in transformer activations. Modern accelerators (e.g., NVIDIA H100) provide native FP8 support, enabling up to $2\times$ throughput improvement over BF16 for matrix multiplications.

To accelerate the rollout phase, we quantize both the weights and activations of the rollout model $\oldtheta$ to obtain a low-bit counterpart $\qoldtheta = Q(\oldtheta)$. Because quantization affects the entire forward pass, the logits computed by the quantized model differ from those of the full-precision training model, even when both share the same underlying parameters:
\begin{equation}
    \pi_{\qoldtheta}(\cdot \mid x_{<t}) = \text{softmax}\!\big(f(x; \qoldtheta)\big) \;\neq\; \text{softmax}\!\big(f(x; \oldtheta)\big) = \pi_{\oldtheta}(\cdot \mid x_{<t}).
    \label{eq:logit_mismatch}
\end{equation}
This distributional divergence between $\pi_{\qoldtheta}$ and $\pi_{\oldtheta}$ is the root cause of the rollout-training mismatch analyzed in Section 3.
\subsection{GRPO and On-Policy Policy Gradients}
Group Relative Policy Optimization (GRPO)~\citep{shao2024deepseekmath} simplifies PPO~\citep{schulman2017proximal} by replacing the learned value function with group-level statistics.
For each query $q$, GRPO samples $G$ responses $\{o_1, \ldots, o_G\}$ from the old policy $\pi_{\text{old}}$ and computes the advantage for each response as
\begin{equation}
    A_i^k = r_i - \mathrm{mean}(\{r_j\}_{j=1}^G), \quad 1 \leq k \leq |o_i|,
    \label{eq:grpo_adv}
\end{equation}
where $r_i$ is the reward for $o_i$.
Following \citet{liu2025understanding}, we use the unnormalized form to avoid the bias introduced by dividing by a state-dependent standard deviation.
The GRPO objective combines PPO-style clipping with a reverse KL penalty:
\begin{equation}
\label{eq:grpo_loss}
\resizebox{0.93\columnwidth}{!}{
$
\displaystyle
\mathcal{J}(\theta) = \mathbb{E}_{\substack{q, \{o_i\}}}
    \left[ \frac{1}{G}\sum_{i=1}^G \frac{1}{|o_i|}\sum_{t=1}^{|o_i|} \min
    \left(\frac{\pi_{\theta}(o_{i,t}|q)}{\pi_{\text{old}}(o_{i,t}|q)} A_{i,t},\; \mathrm{clip}\!\left(\frac{\pi_{\theta}(o_{i,t}|q)}{\pi_{\text{old}}(o_{i,t}|q)}, 1{-}\alpha, 1{+}\alpha \right) A_{i,t}\right)
- \beta\, D_{\mathrm{KL}}\!\left(\pi_\theta \| \pi_\text{ref}\right)
\right],
$
}
\end{equation}
where $\pi_{\text{old}}$ is the sampling policy, $\pi_{\text{ref}}$ is the reference policy (typically the initial SFT model), $\alpha$ controls the clipping range, and $\beta$ governs KL regularization strength.

\section{Method}
\label{sec:method}

\begin{wrapfigure}{r}{0.5\textwidth}
    \centering
    \vspace{-10pt}
    \includegraphics[width=0.5\textwidth]{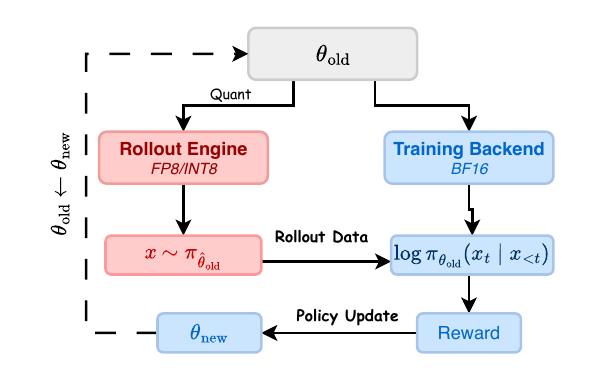}
    \caption{Quantized RL training pipeline. }
    \label{fig:quantized-rl-pipeline}
    \vspace{-10pt}
\end{wrapfigure}

\label{sec:problem-formulation}
Modern large-scale RL pipelines for LLMs decouple rollout generation from policy optimization to maximize hardware utilization~\citep{espeholt2018impala}. As illustrated in Figure~\ref{fig:quantized-rl-pipeline}, the rollout phase uses a dedicated inference engine (e.g., vLLM~\citep{kwon2023efficient}) to sample trajectories, while the training phase computes gradients and updates parameters using a separate backend (e.g., FSDP~\citep{zhao2023pytorch}). To accelerate rollout, we quantize the old actor $\oldtheta$ to a low-bit counterpart $\qoldtheta = Q(\oldtheta)$, enabling efficient matrix multiplication at reduced precision (FP8/INT8).

This design introduces a fundamental discrepancy. Although $\oldtheta$ and $\qoldtheta$ derive from the same parameters, quantization alters the forward pass, producing different logits and therefore different token-level probabilities:
\begin{equation}
    \pi_{\qoldtheta}(x_t \mid x_{<t}) \;\neq\; \pi_{\oldtheta}(x_t \mid x_{<t}).
    \label{eq:logit-mismatch}
\end{equation}

We refer to this as the \emph{rollout-training mismatch}. This mismatch stems not only from reduced numerical precision but also from broader implementation discrepancies between inference and training frameworks (e.g., disparate hardware backends or decoding strategies). These factors collectively exacerbate the divergence between the quantized rollout policy $\pi_{\qoldtheta}$ and the full-precision learner policy $\pi_{\oldtheta}$.

Since trajectories are sampled from the quantized policy $\pi_{\qoldtheta}$ while gradients are computed under the full-precision policy $\pi_{\oldtheta}$, the resulting policy gradient estimator
\begin{equation}
    \nabla_\theta J \approx \mathbb{E}_{x \sim \pi_{\qoldtheta}}
    \left[A(x)\,\nabla_\theta \log \pi_{\oldtheta}(x)\right]
    \label{eq:mismatch-estimator}
\end{equation}
where $A(x)$ denotes the advantage estimate associated with trajectory $x$, deviates from the desired on-policy gradient $\mathbb{E}_{x \sim \pi_{\oldtheta}}[A(x)\,\nabla_\theta \log \pi_{\oldtheta}(x)]$. The induced bias scales with $D_{\mathrm{KL}}(\pi_{\qoldtheta} \| \pi_{\oldtheta})$, which we observe grows monotonically during training as the policy sharpens and quantization artifacts interact with increasingly peaked distributions. This compounding effect degrades both sample efficiency and final performance~\citep{yao2025offpolicy}.

\paragraph{Importance sampling correction and its limitations.}
A common approach to mitigating the rollout-training mismatch is importance sampling (IS)~\citep{rubinstein2016simulation}. By reweighting each trajectory using the ratio between the full-precision learner policy $\pi_{\oldtheta}$ and the quantized rollout policy $\pi_{\qoldtheta}$, we can, in principle, recover the on-policy expectation:
\begin{equation}
    w(x) \;=\; \frac{\pi_{\oldtheta}(x)}{\pi_{\qoldtheta}(x)} \;=\; \prod_{t=1}^{T} \rho_t, \qquad \rho_t \;=\; \frac{\pi_{\oldtheta}(x_t \mid x_{<t})}{\pi_{\qoldtheta}(x_t \mid x_{<t})}.
    \label{eq:is-weight}
\end{equation}
However, standard IS suffers from prohibitively high variance in LLMs, as the cumulative weight $w(x)$ is a product of $T$ per-token ratios that can grow or shrink exponentially with sequence length, especially when $\pi_{\qoldtheta}$ and $\pi_{\oldtheta}$ diverge due to quantization. To control this variance, FlashRL~\citep{yao2025flashrl} employs Truncated Importance Sampling (TIS), which applies a per-token clip at a fixed threshold $C$ rather than accumulating across the trajectory. The resulting gradient estimator is
\begin{equation}
    \nabla_\theta J_{\mathrm{TIS}} \;\approx\; \mathbb{E}_{x \sim \pi_{\qoldtheta}} \!\left[ \sum_{t=1}^{T} \min(\rho_t,\, C)\, A_t \, \nabla_\theta \log \pi_{\oldtheta}(x_t \mid x_{<t}) \right],
    \label{eq:token-level-tis}
\end{equation}
where $A_t$ is the token-level advantage (e.g., the group-relative advantage in GRPO~\citep{shao2024deepseekmath}).

\begin{wrapfigure}{r}{0.5\textwidth}
    \centering
    \vspace{-10pt}
    \includegraphics[width=0.44\textwidth]{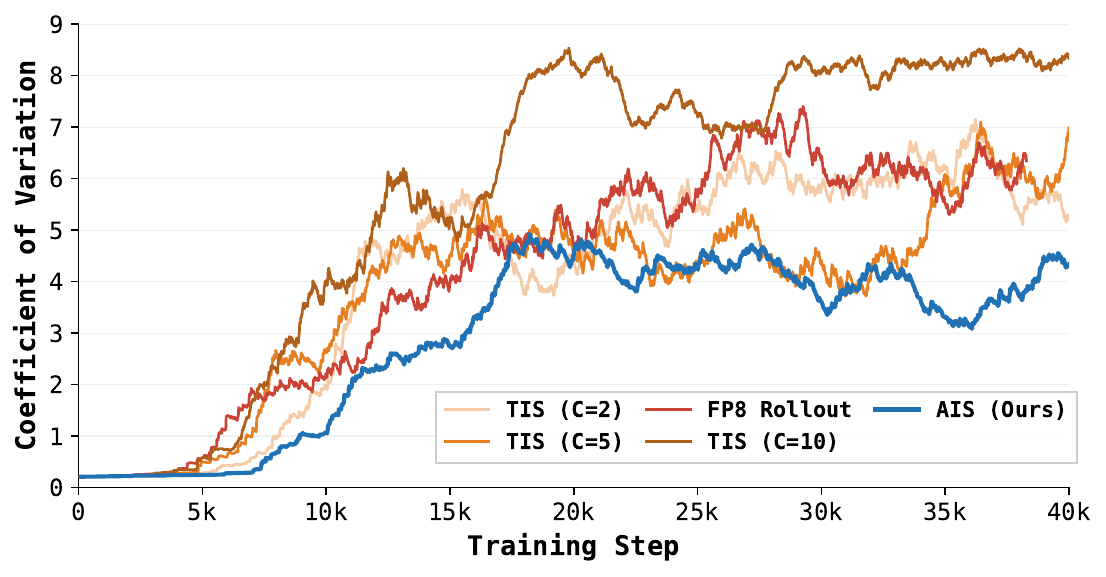}
    \caption{Variation analysis during LLaDA-8B training on GSM8K.}
    \label{fig:training-cv}
    \vspace{-10pt}
\end{wrapfigure}

While TIS stabilizes training over uncorrected FP8 rollout, applying the same correction strength to every batch is limited. Figure~\ref{fig:training-cv} sweeps the TIS threshold $C$ and tracks the Coefficient of Variation $\mathrm{CV} = \sigma(w)/\mu(w)$ of the importance weights: $C{=}10$ lets CV grow rapidly, $C{=}2$ over-suppresses the corrective signal, and $C{=}5$ sits between the two. As the policy sharpens, no single $C$ remains right. AIS keeps the same truncation as a stable variance cap but introduces a per-batch coefficient $\alpha(x)$ (Section~\ref{sec:method}.1) that controls \emph{how strongly} the truncated correction is applied, holding CV lowest throughout training.

\begin{figure}[t]
\centering
\includegraphics[width=\linewidth]{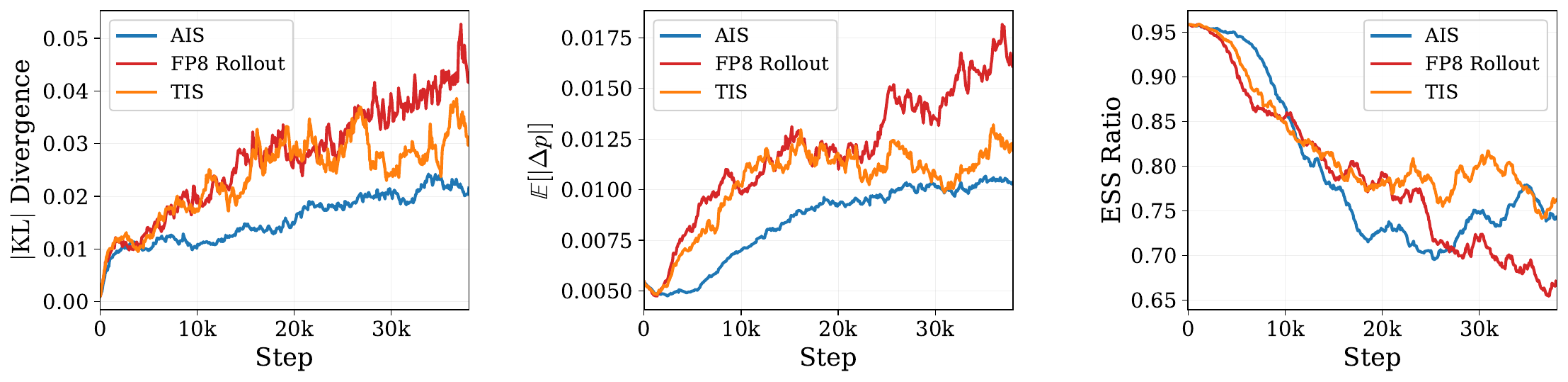}
\caption{LLaDA-8B training behavior under different low-bit rollout strategies on the GSM8K. }
\label{fig:training-dynamics}
\end{figure}

\subsection{Adaptive Importance Sampling for Quantized Rollout}
\label{sec:ais}
As illustrated in Figure~\ref{fig:training-dynamics}, the rollout-training mismatch is inherently non-stationary. The three panels track complementary diagnostics of the mismatch over training for FP8 rollout (no correction), truncated importance sampling (TIS) with a fixed threshold, and our AIS: (i) the KL divergence $|\mathrm{KL}(\pi_{\mathrm{roll}} \,\|\, \pi_{\mathrm{train}})|$ (left), capturing the overall distributional gap, (ii) the expected per-token probability discrepancy $\mathbb{E}[|\Delta p|]$ (middle), which instantiates the mismatch signal $\bar{d}$ in Eqn.~\ref{eq:discrepancy}, and (iii) the Effective Sample Size (ESS) ratio (right), which underlies the reliability signal $\alpha_{\text{ess}}$ in Eqn.~\ref{eq:alpha-ess}. Across all three metrics, the uncorrected FP8 rollout steadily drifts away from the learner as distributional sharpening intensifies, with its ESS ratio decaying from above $0.95$ to below $0.70$. TIS partially mitigates this drift but still trails AIS on all three panels, reflecting its inability to adapt as the mismatch evolves. Such dynamics underscore a fundamental limitation of treating every batch with the same correction strength. This observation motivates an \emph{Adaptive Importance Sampling} (AIS) framework that dynamically navigates the bias-variance frontier.

\paragraph{The Oracle-Optimal Mixing Coefficient.}
We formalize the adaptive correction as an interpolation between the uncorrected gradient ($\alpha = 0$) and the importance-weighted gradient ($\alpha = 1$). Consider the mixed estimator:
\begin{equation}
    \hat{g} \;=\; \mathbb{E}_{x \sim \pi_{\mathrm{rollout}}}\!\left[\bigl(1 - \alpha(x) + \alpha(x)\,\bar{w}(x)\bigr) A(x)\, \nabla_\theta \log \pi_{\mathrm{train}}(x)\right],
    \label{eq:ais-estimator}
\end{equation}
\begin{equation}
    w(x) = \frac{\pi_{\mathrm{train}}(x)}{\pi_{\mathrm{rollout}}(x)}, \qquad \bar{w}(x) = \min\!\bigl(w(x),\, C\bigr), \qquad \alpha(x) \in [0,1],
    \label{eq:ais-components}
\end{equation}
where $\bar{w}(x)$ is the truncated importance weight with threshold $C \geq 1$, and $\alpha(x) \in [0,1]$ is a per-sample mixing coefficient that gates the correction strength: it is pushed toward $1$ under severe rollout-training mismatch and toward $0$ when the sample is already aligned. Note that $w(x) \to 1$ drives the coefficient $1 - \alpha(x) + \alpha(x)\,\bar{w}(x) \to 1$ regardless of $\alpha(x)$, so $\hat{g}$ automatically reduces to the uncorrected estimator in the absence of mismatch. Let $g = \mathbb{E}_{x \sim \pitrain}[A(x)\,\nabla_\theta \log \pitrain(x)]$ be the true on-policy gradient.The uncorrected estimator ($\alpha = 0$) incurs a bias $b_0 = \mathbb{E}_{x \sim \pi_{\mathrm{rollout}}}[A(x)\,\nabla_\theta\log\pi_{\mathrm{train}}(x)] - g$, while the IS estimator ($\alpha = 1$) is approximately unbiased but has variance $\sigma_1^2$ that scales with $\mathbb{E}[\bar{w}(x)^2]$. Treating $\alpha$ as a scalar for analytical tractability, the bias-variance decomposition reduces to a surrogate MSE of $(1-\alpha)^2 \|b_0\|^2 + \alpha^2 \sigma_1^2$. Minimizing this expression with respect to $\alpha$ yields:
\begin{equation}
    \alpha^* = \frac{\|b_0\|^2}{\|b_0\|^2 + \sigma_1^2}.
    \label{eq:oracle-alpha}
\end{equation}
This oracle coefficient provides a principled target: correction should be intensified only when the bias reduction outweighs the induced variance. Since $\|b_0\|$ and $\sigma_1^2$ are not directly observable during training, we derive a practical approximation through a factorized gating mechanism.

\paragraph{Decoupled Modulation via Observable Signals.}
We approximate $\alpha^{*}$ by three observable signals that capture, respectively, the \emph{reliability} of importance weights, the \emph{severity} of the rollout-training divergence, and the \emph{amplification} of gradient variance.

\medskip
\noindent\textbf{(i) Weight Reliability ($\alpha_{\text{ess}}$).}
The variance term $\sigma_1^{2}$ is dominated by the concentration of importance weights: a diffuse weight distribution indicates that only a few samples carry most of the corrective signal, rendering the IS estimate unreliable. We quantify this through the Effective Sample Size (ESS) ratio and take its square root to align the signal with the standard deviation of the IS estimator:
\begin{equation}
    \alpha_{\text{ess}} \;=\; \sqrt{\mathrm{ESS}_{\text{ratio}}} \;=\; \bigl(1 + \mathrm{CV}^{2}(\bar{w})\bigr)^{-1/2},
    \label{eq:alpha-ess}
\end{equation}
where $\mathrm{CV}(\bar{w})$ denotes the sample coefficient of variation of $\bar{w}(x)$ over the current mini-batch. The signal approaches $1$ when weights are near-uniform and decays smoothly as the weight distribution becomes concentrated.


\medskip
\noindent\textbf{(ii) Divergence Severity ($\alpha_{\text{mis}}$).}
The ESS ratio is blind to whether correction is actually needed: when $\pi_{\mathrm{rollout}} \approx \pi_{\mathrm{train}}$, IS correction is redundant even with well-conditioned weights. We therefore measure the raw rollout-training divergence via the mean absolute log-probability discrepancy over the current mini-batch $\mathcal{B}$,
\begin{equation}
    \bar{d} \;=\; \frac{1}{|\mathcal{B}|}\sum_{x_t \in \mathcal{B}} \bigl|\log \pi_{\mathrm{train}}(x_t \mid x_{<t}) - \log \pi_{\mathrm{rollout}}(x_t \mid x_{<t})\bigr|,
    \label{eq:discrepancy}
\end{equation}
where $\mathcal{B}$ collects all token positions in the current mini-batch. We then gate the correction strength by a saturating function of $\bar{d}$:
\begin{equation}
    \alpha_{\text{mis}} \;=\; \min\!\left(1,\; \frac{\bar{d}}{\delta}\right),
    \label{eq:alpha-mis}
\end{equation}
where $\delta > 0$ is a saturation threshold. Correction is thus smoothly bypassed when the sampler and learner are well aligned and engaged only once the mismatch exceeds $\delta$.

\medskip
\noindent\textbf{(iii) Variance Amplification ($\alpha_{\text{var}}$).}
Weight-level diagnostics can still underestimate the variance cost when large importance weights coincide with large-magnitude advantages, inflating the gradient variance beyond what $\alpha_{\text{ess}}$ alone would predict. We monitor the variance-amplification ratio
\begin{equation}
    \Delta\sigma \;=\; \frac{\mathrm{std}\bigl(A(x) \cdot \bar{w}(x)\bigr)}{\mathrm{std}\bigl(A(x)\bigr) + \epsilon},
\end{equation}
where $\mathrm{std}(\cdot)$ denotes the sample standard deviation over the current mini-batch. We activate a subtractive penalty whenever $\Delta\sigma$ exceeds a tolerance $\gamma$:
\begin{equation}
    \alpha_{\text{var}} \;=\; \max\!\left(0,\; \frac{\Delta\sigma - \gamma}{\gamma}\right).
    \label{eq:alpha-var}
\end{equation}
This safeguard downweights the correction precisely in the regime where gradient-space noise, rather than weight-space concentration, becomes the dominant source of error.

\paragraph{Bilateral Factorization and Implementation.}
We synthesize the final mixing coefficient by factorizing it into a reliability gate and a necessity gate:
\begin{equation}
    \alpha = \underbrace{\operatorname{clip}\!\left(\alpha_{\text{ess}} - \beta \cdot \alpha_{\text{var}},\, 0,\, 1\right)}_{\text{Reliability}} \;\cdot\; \underbrace{\alpha_{\text{mis}}}_{\text{Necessity}},
    \label{eq:alpha-final}
\end{equation}
where $\beta$ modulates the sensitivity to gradient-space noise. This product form ensures that correction collapses to zero if either the weights are degenerate or the mismatch is negligible. In practice, AIS is implemented via a simple transformation of the advantage:
\begin{equation}
    \tilde{A}(x) = \left[1 + \alpha \cdot (\bar{w}(x) - 1)\right] A(x)
    \label{eq:adjusted-advantage}
\end{equation}
This formulation seamlessly integrates with standard RLHF frameworks while incurring negligible computational overhead.

\begin{algorithm}[t]
\caption{AIS-GRPO: Adaptive Importance Sampling for GRPO}
\label{alg:ais-grpo}
\begin{algorithmic}[1]
\Require Policy $\pi_\theta$, rollout engine $\pi_{\mathrm{roll}}$, reference policy $\pi_{\mathrm{ref}}$, clipping $\varepsilon$, IS truncation $C$, gating parameters $\delta, \gamma, \beta_{\mathrm{var}}$
\State Initialize $\theta_{\mathrm{old}} \gets \theta$
\For{each training iteration}
    \State \textbf{Rollout Sampling:}
    \State \quad Sample prompts $\{q^{(j)}\} \sim \mathcal{D}$
    \State \quad Generate $G$ responses per prompt using quantized $\pi_{\mathrm{roll}}(\cdot \mid \theta_{\mathrm{old}})$
    \State \quad Compute rewards $R(q, o_i)$ and group-relative advantages $\hat{A}_i^t$
    \State \textbf{Adaptive Coefficient Estimation:}
    \State \quad Compute full-precision log-probs $\log \pitrain(o_i^t \mid \theta_{\mathrm{old}})$ for all generated tokens
    \State \quad $\rho_{\mathrm{rt}, i}^t \gets \exp \bigl( \log \pitrain(o_i^t \mid \theta_{\mathrm{old}}) - \log \piroll(o_i^t \mid \theta_{\mathrm{old}}) \bigr)$ \Comment{Sampler-Trainer ratio}
    \State \quad Compute $\{\alpha_{\mathrm{ess}}, \alpha_{\mathrm{mis}}, \alpha_{\mathrm{var}}\}$ via Eq.~\eqref{eq:alpha-ess}--\eqref{eq:alpha-var}
    \State \quad $\alpha \gets \operatorname{clip}(\alpha_{\mathrm{ess}} - \beta_{\mathrm{var}} \cdot \alpha_{\mathrm{var}}, 0, 1) \cdot \alpha_{\mathrm{mis}}$ \Comment{Bilateral gating, Eq.~\eqref{eq:alpha-final}}
    \State \textbf{Advantage Rectification:}
    \State \quad $\tilde{w}_i^t \gets 1 + \alpha \cdot (\min(\rho_{\mathrm{rt},i}^t, C) - 1)$ \Comment{Rectification weight}
    \State \quad $\tilde{A}_i^t \gets \tilde{w}_i^t \cdot \hat{A}_i^t$ \Comment{AIS-adjusted advantage, Eq.~\eqref{eq:adjusted-advantage}}
    \State \textbf{Policy Update:}
    \State \quad Update $\theta$ by maximizing $\mathcal{L}_{\mathrm{AIS\text{-}GRPO}}$ in Eq.~\eqref{eq:ais-grpo} using $\tilde{A}_i^t$
    \State \quad Update $\theta_{\mathrm{old}} \gets \theta$
\EndFor
\end{algorithmic}
\end{algorithm}

\subsection{Integration with GRPO}
\label{sec:ais-grpo}

We instantiate AIS within Group Relative Policy Optimization (GRPO)~\citep{shao2024deepseekmath}, a predominant on-policy RL algorithm for reasoning LLMs. A critical distinction in AIS-GRPO is the presence of two \textit{orthogonal} importance ratios that address disparate sources of distributional shift:

\begin{itemize}[leftmargin=*, labelsep=0.5em]
    \item \textbf{Rollout--Training Ratio} ($\rho_{\text{rt},i}^t$): Corrects the precision-induced shift between the quantized sampler and the full-precision learner \textit{at the same policy checkpoint} ($\theta_{\text{old}}$).
    \item \textbf{Old--New Policy Ratio} ($\rho_{\text{on},i}^t$): Controls the standard trust-region divergence between the behavioral policy $\pi_{\theta_{\text{old}}}$ and the current target $\pi_\theta$ \textit{during optimization}.
\end{itemize}

By applying the adaptive coefficient $\alpha$ (Eq.~\ref{eq:alpha-final}), we transform the rollout--training ratio into a rectification weight $\tilde{w}_i^t = 1 + \alpha (\min(\rho_{\text{rt},i}^t, C) - 1)$. The resulting \textbf{AIS-GRPO objective} is formulated as:
\begin{equation}
\label{eq:ais-grpo}
\mathcal{L}_{\text{AIS-GRPO}}(\theta) = \mathbb{E}_{q, \{o_i\} \sim \piroll} \left[ \frac{1}{G}\sum_{i=1}^G \frac{1}{|o_i|}\sum_{t=1}^{|o_i|} \tilde{w}_i^t \cdot \min\left( \rho_{\text{on},i}^t \hat{A}_i^t, \text{clip}(\rho_{\text{on},i}^t, 1\pm\varepsilon) \hat{A}_i^t \right) \right] - \beta \mathcal{D}_{\text{KL}},
\end{equation}
where $\mathcal{D}_{\text{KL}}$ denotes the standard KL divergence against the reference policy. This formulation achieves a clean \textit{separation of concerns}: $\tilde{w}_i^t$ rectifies the numerical artifacts of the rollout engine, while the clipped surrogate objective maintains optimization stability. Notably, when the quantization mismatch is negligible ($\alpha \to 0$), the AIS weight $\tilde{w}_i^t$ converges to unity, and the objective reduces exactly to vanilla GRPO, ensuring seamless backward compatibility and minimal overhead.

%% file: sections/experiments.tex
\section{Experiments}
\label{sec:experiments}

We design experiments to investigate four questions:
\textbf{(Q1)} Does the rollout--learner mismatch induced by FP8 quantization materially degrade RL training quality?
\textbf{(Q2)} Can AIS recover the lost performance while preserving the efficiency benefits of low-precision rollout?
\textbf{(Q3)} Does AIS generalize across model architectures, including non-autoregressive diffusion models?
\textbf{(Q4)} What are the wall-clock and memory savings from FP8 rollout in practice?

\subsection{Models, Tasks, and Setup}

\paragraph{Models.} We evaluate on two architecture families: autoregressive Qwen3-8B and Qwen3.5-9B~\citep{yang2025qwen3}, and the masked diffusion model LLaDA-8B-Instruct~\citep{nie2025large}. All models are trained with GRPO under both full-parameter and LoRA~\citep{hu2022lora} fine-tuning.

\paragraph{Tasks.} We cover two categories of reasoning tasks: \emph{mathematical reasoning} (GSM8K~\citep{cobbe2021training}, MATH500~\citep{hendrycks2021measuring}, AIME 2025, AMC 2023, Olympiad Bench~\citep{olympiadbench}) and \emph{planning} ($4\!\times\!4$ Sudoku and Countdown). We also report general-capability benchmarks (MMLU, PIQA, ARC, HellaSwag, WinoGrande) as a sanity check.

\paragraph{Others.} We mainly compare four configurations: \textbf{BF16}, \textbf{FP8 Rollout}, \textbf{FlashRL (TIS)}~\citep{yao2025flashrl}, and \textbf{AIS (ours)}. FP8 is applied only to the rollout policy, and the trainer and optimizer state remain in BF16. Full training, evaluation, and dataset details are in Appendix~\ref{appendix:training_details}.

\subsection{Main Results on Autoregressive Models}
\label{sec:main-ar}

We first establish the core results of AIS on autoregressive LLMs by reporting reasoning accuracy (Table~\ref{tab:ar-reasoning}) and general-capability accuracy (Table~\ref{tab:ar-general} in Appendix) on Qwen3-8B and Qwen3.5-9B.

\begin{table}[t]
\centering
\small
\caption{Evaluation accuracy (\%) on reasoning benchmarks for autoregressive models.}
\resizebox{\textwidth}{!}{%
\begin{tabular}{ll c cccccc}
\toprule
\textbf{Model} & \textbf{Method} & \textbf{Bitwidth} & \textbf{GSM8K} & \textbf{MATH500} & \textbf{AIME25} & \textbf{AMC} & \textbf{Olympiad} & \textbf{Sudoku} \\
\midrule
\multirow{4}{*}{Qwen3-8B}
  & RL (BF16)        & BF16 & 89.84 & 76.80 & 36.70 & 68.50 & 48.80 & 38.28 \\
\cmidrule(lr){2-9}
  & RL (FP8 Rollout) & FP8  & 87.65 & 75.80 & 26.81 & 61.25 & 42.10 & 36.33 \\
  & FlashRL (TIS)    & FP8  & 88.63 & 76.20 & 34.23 & 63.40 & 46.50 & 37.50 \\
  & AIS (Ours)       & FP8  & \textbf{89.99} & \textbf{77.00} & \textbf{43.33} & \textbf{69.75} & \textbf{48.20} & \textbf{39.10} \\
\midrule
\multirow{4}{*}{Qwen3.5-9B}
  & RL (BF16)        & BF16 & \textbf{92.57} & 83.20 & 69.50 & 74.20 & \textbf{51.30} & \textbf{89.84} \\
\cmidrule(lr){2-9}
  & RL (FP8 Rollout) & FP8  & 90.22 & 71.00 & 60.00 & 65.80 & 45.60 & 85.94 \\
  & FlashRL (TIS)    & FP8  & 90.80 & 72.30 & 66.67 & 67.50 & 49.02 & 88.67 \\
  & AIS (Ours)       & FP8  & 91.74 & \textbf{85.60} & \textbf{73.33} & \textbf{75.50} & 50.80 & 89.45 \\
\bottomrule
\end{tabular}%
}
\label{tab:ar-reasoning}
\end{table}

\paragraph{FP8 rollout severely degrades hard reasoning (Q1).}
Naive FP8 rollout inflicts substantial degradation on difficulty-sensitive benchmarks. As shown in Table~\ref{tab:ar-reasoning}, AIME25 accuracy on Qwen3-8B drops from $36.70\%$ to $26.81\%$ ($-9.89\%$), and on Qwen3.5-9B, MATH500 and AIME25 fall by $12.20\%$ and $9.50\%$, respectively. The degradation is markedly milder on easier benchmarks such as GSM8K ($-2.19\%$ and $-2.35\%$ on the two models), consistent with our analysis that the rollout-training divergence compounds with with task difficulty.

\paragraph{Static correction provides only partial recovery.}
FlashRL (TIS) recovers part of the lost accuracy via truncated importance ratios but cannot track the non-stationary mismatch with a single correction strength. On Qwen3-8B AIME25, TIS lifts accuracy from $26.81\%$ to $34.23\%$, still $2.47\%$ below BF16. Similar gaps persist on AMC and Olympiad, indicating that static correction is insufficient whenever the rollout-training divergence drifts during training.

\paragraph{AIS recovers and often surpasses full-precision performance (Q2).}
AIS attains the highest FP8 accuracy on all reasoning benchmarks for both models, matching or exceeding BF16 on $11$ of $12$ pairs in Table~\ref{tab:ar-reasoning}, with the only shortfalls (Olympiad on Qwen3-8B, Sudoku on Qwen3.5-9B) sitting within $1\%$ of BF16. The recovery is most pronounced on hard benchmarks, with AIS lifting AIME25 from $26.81\%$ to $43.33\%$ on Qwen3-8B and from $60.00\%$ to $73.33\%$ on Qwen3.5-9B, exceeding BF16 in both cases. We attribute occasional gains beyond BF16 to stochastic FP8 perturbations acting as exploration noise, leaving a precise characterization to future work.

\paragraph{General capabilities are preserved.}
On six general-capability benchmarks (Table~\ref{tab:ar-general}, Appendix), AIS matches BF16 on every task for both models, indicating that adaptive correction does not overfit to the RL reward signal.

\subsection{Generalization to Diffusion-based Architectures}
\label{sec:generalization-dllm}

Having established AIS on autoregressive models, we next ask whether its benefits extend to non-autoregressive ones (Q3). We evaluate on LLaDA-8B-Instruct, a masked diffusion model. To our knowledge, this is the first study of low-bit rollout for RL on diffusion LLMs.
\begin{table*}[t]
\centering
\small
\setlength{\tabcolsep}{4pt}
\renewcommand{\arraystretch}{1.1}
\caption{Full-model fine-tuning performance on LLaDA-8B across reasoning benchmarks.}
\begin{tabular}{l c c c c c c c c c c c c}
\toprule
\multirow{2}{*}{\textbf{Method}}
& \multicolumn{3}{c}{\textbf{GSM8K}}
& \multicolumn{3}{c}{\textbf{MATH500}}
& \multicolumn{3}{c}{\textbf{Countdown}}
& \multicolumn{3}{c}{\textbf{Sudoku}} \\
\cmidrule(lr){2-4}\cmidrule(lr){5-7}\cmidrule(lr){8-10}\cmidrule(lr){11-13}
& \textbf{128} & \textbf{256} & \textbf{512}
& \textbf{128} & \textbf{256} & \textbf{512}
& \textbf{128} & \textbf{256} & \textbf{512}
& \textbf{128} & \textbf{256} & \textbf{512} \\
\midrule
\textbf{BF16}
& 74.60 & \textbf{81.10} & \textbf{82.10}  & \textbf{33.80} & \textbf{38.91} & 40.20  & 44.80 & 52.00 & 52.20  & 22.10 & \textbf{26.70} & \textbf{29.25} \\
\textbf{FP8 Rollout}
& 73.99 & 79.15 & 80.89  & 30.80 & 37.00 & 37.80  & 37.73 & 34.38 & 33.20  & 21.63 & 23.30 & 23.01 \\
\textbf{FlashRL (TIS)}
& 74.37 & 78.24 & 80.44  & 33.68 & 37.40 & 39.00  & 39.06 & 42.97 & 48.05  & 23.80 & 24.69 & 23.73 \\
\textbf{AIS (Ours)}
& \textbf{74.76} & 80.52 & 81.94  & 32.80 & 38.60 & \textbf{40.27}  & \textbf{54.30} & \textbf{57.03} & \textbf{54.69}  & \textbf{23.88} & 25.07 & 24.07 \\
\bottomrule
\end{tabular}
\label{tab:bf16-fp8-tis-ais}
\end{table*}

\paragraph{Consistent degradation under FP8 rollout.}
The pattern observed on autoregressive models persists on LLaDA. Under full-model fine-tuning (Table~\ref{tab:bf16-fp8-tis-ais}), FP8 rollout underperforms BF16 on every task, with the gap widening as the reasoning horizon grows: $-$$3.00\%$ on MATH500 at $128$ tokens and $-$$19.00\%$ on Countdown at $512$ tokens. Under LoRA (Table~\ref{tab:main_benchmark_lora_llada}, Appendix), the gap widens further on MATH500. This suggests quantization-induced mismatch is not specific to autoregressive decoding.

\paragraph{AIS generalizes across architectures.}
AIS recovers BF16 accuracy across both fine-tuning regimes on most (task, length) configurations. On Countdown under full-model tuning, AIS reaches $54.30\% / 57.03\% / 54.69\%$ at $128 / 256 / 512$ tokens, comparable to or above BF16 ($44.80\% / 52.00\% / 52.20\%$). Under LoRA, AIS likewise recovers or matches BF16 on the majority of configurations (Table~\ref{tab:main_benchmark_lora_llada}, Appendix). The consistency of these results across the sequential paradigm of Qwen and the iterative denoising paradigm of LLaDA suggests that the reliability--necessity factorization in AIS transfers across distinct generation paradigms.

\begin{figure}[t]
    \centering
    \includegraphics[width=\linewidth]{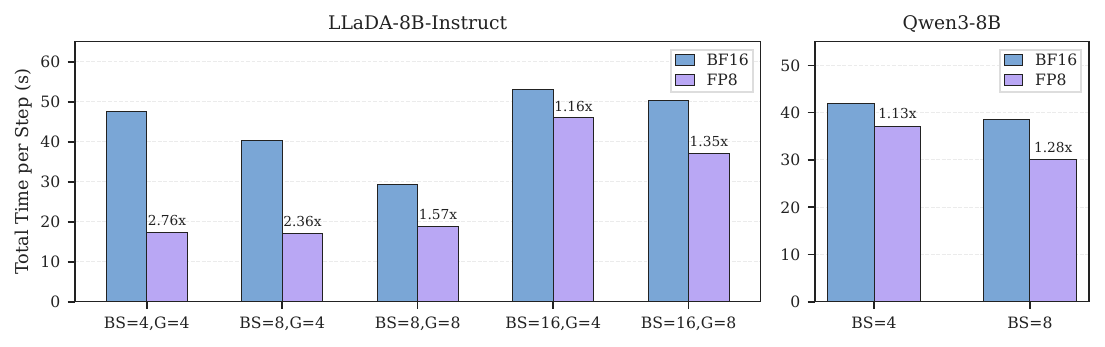}
    \caption{End-to-end rollout speedup from FP8 quantization. Left: LLaDA-8B-Instruct across five (batch size, generation) configurations. Right: Qwen3-8B across two batch sizes.}
    \label{fig:rollout-speedup}
\end{figure}

\subsection{Rollout Efficiency}
\label{sec:efficiency}

We conclude by quantifying the system-level benefits of FP8 rollout (Q4), which is ultimately the motivation for tolerating the distributional mismatch addressed above. While AIS recovers accuracy across both architectures (Section~\ref{sec:generalization-dllm}), the magnitude of the underlying efficiency gain depends on the generation paradigm.

\paragraph{End-to-end speedup.}
Figure~\ref{fig:rollout-speedup} reports end-to-end rollout speedup. On LLaDA-8B, FP8 rollout reaches $2.76\times$ at the small-batch configuration (BS$=4$, G$=4$), where the generation stage dominates wall-clock time and benefits most from reduced-precision arithmetic; as batch size grows and memory bandwidth becomes the bottleneck, the speedup narrows to $1.16$--$1.35\times$ at BS$=16$. On Qwen3-8B, FP8 yields a more modest $1.13\times$ and $1.28\times$ at BS$=4$ and BS$=8$ respectively. The difference reflects the generation paradigm: autoregressive decoding is dominated by per-token KV-cache memory bandwidth, whereas iterative denoising in diffusion models performs full-sequence matrix multiplications at each step, where FP8 tensor-core kernels deliver larger compute gains. The per-step latency breakdown is provided in Appendix~\ref{appendix:bf16-fp8-benchmark}.

\paragraph{Memory efficiency.}
FP8 rollout halves the rollout model footprint relative to BF16 ($7.95$\,GB vs.\ $\sim$$15.9$\,GB on LLaDA-8B; $7.45$\,GB vs.\ $\sim$$14.9$\,GB on Qwen3-8B), enabling larger effective batch sizes within the same memory budget. For instance, BF16 OOMs at BS$=16$ with G$=8$ on LLaDA-8B, whereas FP8 completes the same configuration (Appendix Table~\ref{tab:bf16-fp8-benchmark}).

\section{Conclusion}

In this work, we introduce Adaptive Importance Sampling (AIS), a correction framework that enables low-precision RL training of LLMs without sacrificing model quality. The starting point of AIS is the observation that the rollout-training mismatch is non-stationary and not uniformly harmful: applying the same correction strength across training stages either clips useful exploration or admits destabilizing bias. AIS instead reads three real-time diagnostics from each batch, weight reliability, divergence severity, and variance amplification, and combines them into a single mixing coefficient that adjusts correction strength on the fly, suppressing the destabilizing component while preserving the exploratory benefit. AIS reduces to the uncorrected estimator when no significant shift is detected. To our knowledge, this is the first study of low-precision rollout in RL training of diffusion-based LLMs. AIS matches or surpasses BF16 accuracy while retaining a $1.5\text{--}2.76\times$ rollout speedup and roughly $48\%$ memory reduction.

\newpage
\clearpage

%% file: sections/appendix.tex
\section{Appendix}

\section{Related Work}

\paragraph{Quantization of LLMs.}
Quantization is a standard technique for compressing large language models (LLMs)~\citep{gholami2022survey, nn2}, with $8$-bit formats offering a favorable trade-off between compression and accuracy under native hardware support. For $\mathrm{INT}8$ quantization, the main challenge is activation outliers. LLM.int8()~\citep{dettmers2022gpt3} keeps outlier channels in $\mathrm{FP}16$ while quantizing the rest to $\mathrm{INT}8$. SmoothQuant~\citep{xiao2023smoothquant} migrates the difficulty from activations to weights via per-channel scaling for end-to-end $\mathrm{W}8\mathrm{A}8$ inference. QLLM~\citep{liu2023qllm} further redistributes activation magnitudes through adaptive channel reassembly. On the training side, LLM-QAT~\citep{liu2023llm} performs data-free quantization-aware training to support $\mathrm{W}8\mathrm{A}8$ and lower-bit settings. $\mathrm{FP}8$ quantization replaces integer grids with floating-point formats (E4M3 and E5M2) standardized by \citet{micikevicius2022fp8}, whose wider dynamic range handles outliers more gracefully than $\mathrm{INT}8$. Both $\mathrm{FP}8$ PTQ~\citep{kuzmin2022fp8} and end-to-end $\mathrm{FP}8$ training~\citep{peng2023fp8lm} have been shown to match $\mathrm{FP}16$ accuracy on LLMs.

\paragraph{Infrastructure and Quantization for Efficient RL.}
Scaling RL on LLMs imposes heavy demands on both system and numerical efficiency. On the systems side, modern frameworks decouple rollout from training to improve utilization: OpenRLHF~\citep{hu2024openrlhf} and veRL~\citep{sheng2024hybridflow} place the rollout engine and learner on separate hardware groups with flexible role remapping, while AReaL~\citep{fu2025areal} pushes further toward full asynchrony by overlapping rollout and training through pipelined execution and stale-tolerant updates. Rollout itself is typically served by high-throughput inference engines such as vLLM~\citep{kwon2023efficient}. On the numerical side, quantization compresses LLMs into low-bit formats with native hardware support. For $\mathrm{INT}8$, the main challenge is activation outliers, addressed by LLM.int8()~\citep{dettmers2022gpt3} via mixed-precision retention and by SmoothQuant~\citep{xiao2023smoothquant} via per-channel scaling for end-to-end $\mathrm{W}8\mathrm{A}8$ inference. $\mathrm{FP}8$ quantization~\citep{micikevicius2022fp8} replaces integer grids with floating-point formats (E4M3, E5M2) whose wider dynamic range handles outliers more gracefully, and has been shown to match $\mathrm{FP}16$ accuracy in both PTQ~\citep{kuzmin2022fp8} and end-to-end training~\citep{peng2023fp8lm}.

\textbf{Infrastructure and Systems for RL Training. }
RLVR training places heavy demands on distributed systems, as each step interleaves long-context rollouts, reward computation, and policy updates across heterogeneous workloads. Early frameworks such as TRL~\citep{vonwerra2022trl} and TRLX~\citep{havrilla2023trlx} co-locate rollout and training on the same workers, leaving accelerators idle during decoding. To address this, OpenRLHF~\citep{hu2024openrlhf} and NeMo-Aligner~\citep{shen2024nemo} decouple rollout and trainer onto separate hardware groups, while veRL~\citep{sheng2024hybridflow} dynamically remaps devices between the two roles. A parallel line accelerates rollout via high-throughput inference engines such as vLLM~\citep{kwon2023efficient} and SGLang~\citep{zheng2024sglang}, exploiting continuous batching and prefix caching. More recent systems pursue full asynchrony, with AReaL~\citep{fu2025areal} and StreamRL~\citep{zhong2025streamrl} overlapping rollout and training through pipelined execution and stale-tolerant updates. Jointly optimizing rollout throughput, update staleness, and final model quality remains an open challenge.

\section{Experiment Details}
\label{appendix:training_details}

This appendix describes the training and evaluation protocols used in our experiments. Our LLaDA setup follows the hyperparameter recipe of~\citet{zhao2025d1} to ensure direct comparability, extending it with an FP8 rollout engine and the AIS correction module. Our Qwen experiments adapt the same pipeline to autoregressive architectures and incorporate evaluation components from DeepScaler~\citep{deepscaler2025} and \texttt{lm-evaluation-harness}~\citep{gao2021framework}. Full implementation details, including configuration files and launch scripts, are provided in our released codebase.

\subsection{RL Training Setup}
\label{appendix:training_infra}

\paragraph{Baselines.}
We compare against three configurations:
(1)~\textbf{BF16 (Original)}, the full-precision GRPO training pipeline serving as the upper-bound reference;
(2)~\textbf{FP8 Rollout}, which na\"ively quantizes the rollout model to FP8 without any distributional correction;
(3)~\textbf{FlashRL (TIS)}~\citep{liu2025flashrl}, representing \emph{static correction} via token-level importance sampling with a fixed truncation threshold.
Our proposed method, \textbf{AIS} (Adaptive Importance Sampling), represents \emph{adaptive correction} by replacing the fixed truncation of TIS with a per-batch adaptive mechanism (Section~\ref{sec:ais}).

\paragraph{Training and evaluation.}
We train a separate model for each task using a composed reward function combining formatting and correctness signals. Rollout sequence length is capped at 256/512/1024 tokens throughout RL training. FP8 quantization is applied only to the rollout model, and the reference model and optimizer states remain in BF16. All results was measured on 8$\times$ devices with 1,980 peak TFLOPS each. We evaluate all models with 0-shot prompting and greedy decoding, report accuracy (\%) on held-out test sets, and select the best checkpoint across evaluations conducted every 100 steps starting from step 600.

\paragraph{Framework.}
Our codebase extends the TRL library~\citep{vonwerra2022trl} with dedicated trainers for mismatched-precision RL, supporting both a BF16 baseline (rollout and learner both in BF16) and an FP8-rollout configuration paired with the AIS correction module.

\paragraph{Training modes.}
We evaluate under two regimes to probe robustness across scaling strategies:
\begin{itemize}[leftmargin=2em, itemsep=0.1em]
    \item \textbf{LoRA}~\citep{hu2022lora}: $r=128$ for LLaDA-8B; $r=64$ for Qwen3.5-9B. LoRA is applied to all linear projections (\textit{q, k, v, o, gate, up, down}).
    \item \textbf{Full-parameter fine-tuning (Full FT)}: all parameters updated in BF16, while the rollout policy runs with FP8 quantization (E4M3) at inference time.
\end{itemize}
Optimization uses AdamW~\citep{loshchilov2017decoupled} with $\beta_1 = 0.9$, $\beta_2 = 0.99$, weight decay $0.1$, and gradient clipping at $0.2$. Core hyperparameters are summarized in Table~\ref{tab:hyperparams}.

\begin{table}[htbp]
\centering
\caption{Hyperparameter settings for GRPO training across tasks and training modes.}
\label{tab:hyperparams}
\renewcommand{\arraystretch}{1.2}
\resizebox{\columnwidth}{!}{
\begin{tabular}{lccc}
\hline
\textbf{Hyperparameter} & \textbf{Full-parameter FT} & \textbf{LoRA (Qwen3.5-9B)} & \textbf{LoRA (LLaDA-8B)} \\ \hline
Learning rate & $1\!\times\!10^{-6}$ to $1\!\times\!10^{-5}$ & $1\!\times\!10^{-7}$ to $1\!\times\!10^{-6}$ & $1\!\times\!10^{-6}$ to $3\!\times\!10^{-6}$ \\
PPO clipping ($\epsilon$) & $0.2$ & $0.2$ & $0.2$ \\
Number of generations & $8$ & $8$ & $8$ \\
Max prompt length & $256$ & $256$ & $256$ \\
Max completion length & $256 \sim 512$ & $256 \sim 512$ & $256 \sim 1024$ \\
Effective batch size & $96 \sim 128$ & $128$ & $96$ \\ \hline
\end{tabular}
}
\end{table}

\subsection{LLaDA Experiments}
\label{appendix:llada-details}

\paragraph{Model and decoding.}
We fine-tune LLaDA-8B-Instruct~\citep{nie2025large} using semi-autoregressive decoding. For a sequence of length $N$, we perform $N/2$ denoising steps, unmasking the two highest-confidence tokens within the current block at each step. We use block size $32$ and decode left-to-right, which we found to yield better training stability than fully random-order denoising.

\paragraph{Task-specific recipe.}
Training lengths are $7{,}700$ steps for GSM8K and $6{,}600$ for MATH500. Each prompt token is randomly masked with $p_{\text{mask}} = 0.15$ during log-probability estimation, following~\citet{zhao2025d1} to align with the LLaDA masked-prediction objective. We use a composite reward combining (i) a \emph{format reward} for proper XML tagging (\texttt{<reasoning>}, \texttt{<answer>}) and answer delimiters, and (ii) a \emph{correctness reward} (exact match for GSM8K, boxed-answer equivalence for MATH500, and valid arithmetic verification for Countdown).

\paragraph{Deviation from the reference recipe.}
The reference codebase uses $4$-bit quantization during training; we instead keep the learner in BF16, in order to isolate the distributional effects of FP8 \emph{rollout} quantization from any training-side quantization noise.

\subsection{Qwen Experiments}
\label{appendix:qwen-details}

We adapt the GRPO pipeline to Qwen3.5-9B-Instruct. Compared to the LLaDA setup, the main differences are: standard autoregressive decoding (temperature $0.9$) in place of semi-autoregressive denoising, no prompt-masking step, extended sequence length ($1024$ tokens) to accommodate longer chain-of-thought completions, and a cosine learning-rate schedule with $5\%$ linear warm-up. vLLM~\citep{kwon2023efficient} serves FP8 rollouts, with learner and rollout engine on disjoint groups and weight synchronization at each training step.

\subsection{Low-bit Rollout and AIS}
\label{appendix:fp8-ais}

\paragraph{FP8 rollout.}
The rollout policy is quantized to FP8 (E4M3) with per-tensor dynamic scaling, executed via NVIDIA's \texttt{TransformerEngine} for native FP8 tensor-core kernels. Only the rollout forward pass runs in FP8; log-probability evaluation under $\pi_{\mathrm{train}}$ and backpropagation remain in BF16.

\paragraph{AIS hyperparameters.}
Tuned on a GSM8K validation split and held fixed across all LLaDA and Qwen tasks:
\begin{itemize}[leftmargin=2em,itemsep=0.1em]
    \item Clipping threshold $C = 5$: caps the per-token importance weight.
    \item Saturation threshold $\delta = 0.02$: gates correction on the mean absolute log-probability drift.
    \item Variance tolerance $\gamma = 1.2$: suppresses correction when IS variance threatens gradient stability.
    \item Numerical stabilizer $\epsilon = 10^{-6}$.
\end{itemize}
All three signals $\alpha_{\text{ess}}, \alpha_{\text{mis}}, \alpha_{\text{var}}$ are computed from detached tensors to avoid spurious coupling with the policy gradient (cf.~Appendix~\ref{app:theory}).

\subsection{Evaluation Protocol}
\label{appendix:eval-details}

\paragraph{DeepScaleR-style evaluation (AMC).}
For AMC 2022--2023, we follow the DeepScaler~\citep{deepscaler2025} protocol: $n = 16$ samples per problem with temperature $0.6$ and top-$p = 0.95$. We report Pass@1 (mean accuracy over the $16$ samples) and Pass@16 (fraction of problems with at least one correct sample). Verification uses SymPy-based symbolic equivalence.

\paragraph{Deterministic evaluation.}
For GSM8K and MATH500 we report Pass@1 with greedy decoding ($T = 0$). For AIME24 we report Pass@1 averaged over $8$ samples at $T = 0.6$. LLaDA evaluations retain semi-autoregressive decoding; Qwen evaluations use the \texttt{lm-evaluation-harness} with DeepScaler regex-based answer extraction.

\section{Theoretical Analysis of the AIS Estimator}
\label{app:theory}

This appendix provides the formal analysis underpinning the AIS estimator introduced in Section~\ref{sec:method}. We (i) derive the oracle mixing coefficient $\alpha^{\star}$ under a surrogate mean-squared error criterion (Proposition~\ref{prop:oracle}), (ii) establish a bounded second moment for the truncated AIS estimator (Proposition~\ref{prop:variance}), and (iii) demonstrate that AIS exactly recovers the on-policy gradient when rollout-training mismatch is absent (Proposition~\ref{prop:consistency}). Throughout this analysis, expectations $\mathbb{E}_{\mathrm{rollout}}$ and $\mathbb{E}_{\mathrm{train}}$ are taken with respect to $\pi_{\mathrm{rollout}}$ and $\pi_{\mathrm{train}}$ respectively, and $\|\cdot\|$ denotes the Euclidean norm.

\subsection{Setup and Notation}
\label{app:setup}

Let $\pi_{\mathrm{train}}(\cdot)$ and $\pi_{\mathrm{rollout}}(\cdot)$ represent two distributions over trajectories $x$, where $\mathrm{supp}(\pi_{\mathrm{train}}) \subseteq \mathrm{supp}(\pi_{\mathrm{rollout}})$. Let $A(x) \in \mathbb{R}$ be a bounded advantage function and $s(x) := \nabla_\theta \log \pi_{\mathrm{train}}(x) \in \mathbb{R}^{d}$ denote the score vector. The importance ratio and its clipped version are defined as:
\begin{equation}
    w(x) = \frac{\pi_{\mathrm{train}}(x)}{\pi_{\mathrm{rollout}}(x)}, \qquad \bar{w}(x) = \min\bigl(w(x),\, C\bigr), \qquad C \geq 1.
\end{equation}
For a fixed scalar $\alpha \in [0,1]$, we define the mixed AIS estimator:
\begin{equation}
    \hat{g}(\alpha) = \mathbb{E}_{\mathrm{rollout}}\!\left[\bigl(1 - \alpha + \alpha\,\bar{w}(x)\bigr)\, A(x)\, s(x)\right].
    \label{eq:app-ais-estimator}
\end{equation}
We evaluate $\hat{g}(\alpha)$ against the true on-policy gradient $g = \mathbb{E}_{\mathrm{train}}[A(x)\, s(x)]$. We define the following components:
\begin{align}
    \hat{g}_0 &= \mathbb{E}_{\mathrm{rollout}}[A(x)\, s(x)], \qquad b_0 = \hat{g}_0 - g, \\
    \hat{g}_1 &= \mathbb{E}_{\mathrm{rollout}}[\bar{w}(x)\, A(x)\, s(x)], \qquad b_1 = \hat{g}_1 - g,
\end{align}
such that $\hat{g}(\alpha) = (1-\alpha)\,\hat{g}_0 + \alpha\,\hat{g}_1$. Here, the bias $b_0$ originates from the rollout-training mismatch, while $b_1$ represents the residual truncation bias, which vanishes as $C \to \infty$.

\paragraph{Assumptions.} We assume the following regularity conditions:
\begin{itemize}[leftmargin=2em,itemsep=0.2em,topsep=0.2em]
    \item[(A1)] \emph{Bounded advantage:} There exists $M_A < \infty$ such that $|A(x)| \leq M_A$ almost surely.
    \item[(A2)] \emph{Bounded score:} There exists $M_s < \infty$ such that $\|s(x)\| \leq M_s$ almost surely.
    \item[(A3)] \emph{Absolute continuity:} $\mathrm{supp}(\pi_{\mathrm{train}}) \subseteq \mathrm{supp}(\pi_{\mathrm{rollout}})$.
    \item[(A4)] \emph{Negligible truncation bias:} $\|b_1\| \ll \|b_0\|$, allowing us to set $b_1 = 0$ in the surrogate analysis.
\end{itemize}

\subsection{Oracle Mixing Coefficient}
\label{app:oracle}

We consider the surrogate mean-squared error (MSE) as a proxy for estimation quality:
\begin{equation}
    \mathrm{MSE}(\alpha) = \|\mathbb{E}[\hat{g}(\alpha)] - g\|^{2} + \mathrm{tr}\,\mathrm{Cov}[\hat{g}(\alpha)].
    \label{eq:app-mse}
\end{equation}
The following proposition identifies the optimal $\alpha$ that minimizes this criterion.

\begin{proposition}[Oracle mixing coefficient]
\label{prop:oracle}
Under (A1)--(A4), let $\sigma_1^{2} = \mathrm{tr}\,\mathrm{Cov}_{\mathrm{rollout}}[\bar{w}(x)\, A(x)\, s(x)]$ and $\sigma_0^{2} = \mathrm{tr}\,\mathrm{Cov}_{\mathrm{rollout}}[A(x)\, s(x)]$ be the per-sample covariance traces of the two endpoint estimators, and let $\kappa = \mathrm{tr}\,\mathrm{Cov}_{\mathrm{rollout}}[A(x)\,s(x),\, \bar{w}(x)\,A(x)\,s(x)]$ be their cross-covariance trace. The MSE is minimized at
\begin{equation}
    \alpha^{\star}_{\mathrm{exact}} = \frac{\|b_0\|^{2} + \sigma_0^{2} - \kappa}{\|b_0\|^{2} + \sigma_0^{2} + \sigma_1^{2} - 2\kappa}.
    \label{eq:app-oracle-exact}
\end{equation}
In the regime where $\sigma_1^{2}$ dominates $\sigma_0^{2}$ and $\kappa$, equation~\eqref{eq:app-oracle-exact} simplifies to the form used in our practical implementation:
\begin{equation}
    \alpha^{\star} = \frac{\|b_0\|^{2}}{\|b_0\|^{2} + \sigma_1^{2}}.
    \label{eq:app-oracle-alpha}
\end{equation}
\end{proposition}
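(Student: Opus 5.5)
The plan is to write $\mathrm{MSE}(\alpha)$ in closed form as a quadratic in the scalar $\alpha$ and then minimize it directly. I will interpret $\hat g(\alpha)$ as the single-sample quantity $X(\alpha) := (1-\alpha)X_0 + \alpha X_1$, with $X_0 = A(x)s(x)$, $X_1 = \bar w(x)A(x)s(x)$ and $x\sim\pi_{\mathrm{rollout}}$; its mean is the expression in \eqref{eq:app-ais-estimator}. Under (A1)--(A2) and $\bar w\le C$, both $X_0$ and $X_1$ are bounded by $M_AM_s$ and $CM_AM_s$ respectively. Hence all first and second moments are finite, and $\sigma_0^2$, $\sigma_1^2$, $\kappa$ are well defined. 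Assumption (A3) is used only so that $w$ is defined on $\mathrm{supp}(\pi_{\mathrm{train}})$.

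\textbf{Step 1: bias term.} By linearity, $\mathbb{E}[X(\alpha)]-g = (1-\alpha)b_0 + \alpha b_1$. Using (A4) to set $b_1=0$, the squared bias is $(1-\alpha)^2\|b_0\|^2$.

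\textbf{Step 2: variance term.} Bilinearity of the covariance, together with linearity of the trace, gives
\begin{equation*}
\mathrm{tr}\,\mathrm{Cov}[X(\alpha)] = (1-\alpha)^2\sigma_0^2 + \alpha^2\sigma_1^2 + 2\alpha(1-\alpha)\kappa .
\end{equation*}
This uses that the trace of the cross-covariance is symmetric in its two arguments. Combining Steps 1 and 2 with $B:=\|b_0\|^2$,
\begin{equation*}
\mathrm{MSE}(\alpha) = (1-\alpha)^2(B+\sigma_0^2) + \alpha^2\sigma_1^2 + 2\alpha(1-\alpha)\kappa .
\end{equation*}

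\textbf{Step 3: minimization.} Setting the derivative to zero gives $-(1-\alpha)(B+\sigma_0^2) + \alpha\sigma_1^2 + (1-2\alpha)\kappa = 0$. Collecting terms yields $\alpha\,(B+\sigma_0^2+\sigma_1^2-2\kappa) = B+\sigma_0^2-\kappa$, which is \eqref{eq:app-oracle-exact}.

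To confirm this stationary point is a minimizer, I need the leading coefficient $D:=B+\sigma_0^2+\sigma_1^2-2\kappa$ to be positive. Observe that $\sigma_0^2+\sigma_1^2-2\kappa = \mathrm{tr}\,\mathrm{Cov}[X_0-X_1] \ge 0$. Therefore $D\ge B$, so the quadratic is convex, and it is strictly convex whenever $b_0\neq 0$. In the degenerate case $D=0$, the MSE is constant in $\alpha$ and any $\alpha$ is optimal; I will note this as an edge case.

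\textbf{Expected obstacle: the constraint $\alpha\in[0,1]$.} This is the step I expect to need care, because the unconstrained minimizer need not lie in $[0,1]$. Nonnegativity of the numerator is not automatic, and the condition numerator $\le D$ is equivalent to $\kappa\le\sigma_1^2$. I would handle this by stating that on $[0,1]$ the minimizer is the projection (clip) of \eqref{eq:app-oracle-exact} onto $[0,1]$, which is valid because a convex quadratic in one variable is minimized on an interval at the clipped stationary point. In the regime where the formula already lands in $[0,1]$, no clipping is needed.

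\textbf{Simplification.} Finally, I derive \eqref{eq:app-oracle-alpha} as an approximation rather than an identity. Divide the numerator and denominator of \eqref{eq:app-oracle-exact} by $B+\sigma_1^2$. In the stated regime, $\sigma_0^2/\sigma_1^2$ and $|\kappa|/\sigma_1^2$ are small, and the terms $\sigma_0^2-\kappa$ and $\sigma_0^2-2\kappa$ are negligible relative to $B+\sigma_1^2$. This leaves $B/(B+\sigma_1^2)$, up to an error of order $(\sigma_0^2+|\kappa|)/(B+\sigma_1^2)$. Since the resulting expression lies in $[0,1]$ automatically, the clipping issue above disappears for the simplified form.
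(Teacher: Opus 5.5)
Your proposal is correct and follows the same route as the paper. Using $b_1=0$ from (A4), you expand the MSE as the quadratic $(1-\alpha)^2(\|b_0\|^2+\sigma_0^2)+\alpha^2\sigma_1^2+2\alpha(1-\alpha)\kappa$, set its derivative to zero, and get the simplified form by discarding $\sigma_0^2$ and $\kappa$. Your additional steps are sound refinements that the paper's proof omits: the explicit single-sample reading of $\hat g(\alpha)$, the convexity check via $\sigma_0^2+\sigma_1^2-2\kappa=\mathrm{tr}\,\mathrm{Cov}[X_0-X_1]\ge 0$, and the projection of the stationary point onto $[0,1]$.
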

The simplified form~\eqref{eq:app-oracle-alpha} follows by assuming the regime $\sigma_1^{2} \gg \sigma_0^{2}$ and $\sigma_1^{2} \gg |\kappa|$. This approximation is consistent with empirical findings in importance sampling, where the reweighted variance $\sigma_1^{2}$ typically dominates the total error budget.
\begin{proof}
Substituting $\hat{g}(\alpha) = (1-\alpha)\hat{g}_0 + \alpha \hat{g}_1$ into~\eqref{eq:app-mse} and using $b_1 = 0$ from (A4) gives
\begin{align}
    \mathrm{MSE}(\alpha)
    &= \|(1-\alpha) b_0\|^{2} + \mathrm{tr}\,\mathrm{Cov}\bigl[(1-\alpha)\hat{g}_0 + \alpha \hat{g}_1\bigr] \\
    &= (1-\alpha)^{2}\|b_0\|^{2} + (1-\alpha)^{2}\sigma_0^{2} + \alpha^{2}\sigma_1^{2} + 2\alpha(1-\alpha)\,\kappa.
\end{align}
Differentiating with respect to $\alpha$ and setting to zero yields~\eqref{eq:app-oracle-exact}. The simplified form~\eqref{eq:app-oracle-alpha} follows by setting $\sigma_0^{2} = \kappa = 0$, an approximation supported by the empirical observation that the clipped IS variance $\sigma_1^{2}$ dominates both quantities in the regimes we encounter.
\end{proof}

\subsection{Bounded Second Moment and Consistency}

\begin{proposition}[Bounded second moment]
\label{prop:variance}
Under (A1)--(A3), for any $\alpha \in [0,1]$ and $C \geq 1$, the AIS estimator satisfies
\begin{equation}
    \mathbb{E}_{\mathrm{rollout}}\|\hat{g}(\alpha)\|^{2} \leq \bigl(1 + \alpha(C-1)\bigr)^{2}\, M_A^{2}\, M_s^{2}.
\end{equation}
In particular, $\hat{g}(\alpha)$ has a bounded second moment for any $\alpha \in [0,1]$, ensuring that gradient estimates remain integrable.
\end{proposition}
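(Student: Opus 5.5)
The natural reading, since $\hat g(\alpha)$ as written in \eqref{eq:app-ais-estimator} is a deterministic expectation, is to work with the single-sample estimator $\hat g_x(\alpha) := (1-\alpha+\alpha\bar w(x))A(x)s(x)$ for $x\sim\pi_{\mathrm{rollout}}$. The claim then becomes $\mathbb{E}_{\mathrm{rollout}}\|\hat g_x(\alpha)\|^2 \le (1+\alpha(C-1))^2M_A^2M_s^2$. The deterministic version follows afterwards by Jensen's inequality, $\|\mathbb{E}\hat g_x\|^2\le\mathbb{E}\|\hat g_x\|^2$, so it suffices to prove the pointwise bound
\begin{equation*}
\|\hat g_x(\alpha)\| \le \bigl(1+\alpha(C-1)\bigr)M_AM_s
\end{equation*}
for $\pi_{\mathrm{rollout}}$-almost every $x$, and then square and integrate.

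The key step is to bound the scalar mixing factor $m(x):=1-\alpha+\alpha\bar w(x)$. By (A3), $w(x)$ is well defined and nonnegative on $\mathrm{supp}(\pi_{\mathrm{rollout}})$. Since $\bar w(x)=\min(w(x),C)$, we have $0\le\bar w(x)\le C$. Because $m$ is affine and nondecreasing in $\bar w$, and $\alpha\in[0,1]$, this gives
\begin{equation*}
0\le 1-\alpha\le m(x)\le 1-\alpha+\alpha C = 1+\alpha(C-1).
\end{equation*}
In particular $|m(x)|\le 1+\alpha(C-1)$. The lower bound uses $\alpha\le1$, and the resulting upper bound is at least $1$ because $C\ge1$.

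Next we combine this with (A1) and (A2) using homogeneity of the norm: $\|\hat g_x(\alpha)\| = |m(x)|\,|A(x)|\,\|s(x)\| \le (1+\alpha(C-1))M_AM_s$ almost surely. Squaring and taking $\mathbb{E}_{\mathrm{rollout}}$ of a bounded function yields the stated inequality and integrability. The bound does not depend on $\pi_{\mathrm{train}}$ beyond (A3), which is precisely the point of truncation. If the estimator is instead taken to be a mini-batch average, the same pointwise bound carries over through the triangle inequality.

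None of these steps is computationally hard. The main obstacle is interpretive: fixing which random object the ``second moment'' refers to, since the paper's $\hat g(\alpha)$ is an expectation. I would state the single-sample interpretation explicitly and note that it implies the literal version. A secondary subtlety is that in practice $\alpha$ is computed from the batch, so it is data-dependent. The bound still holds, because it is pointwise and monotone in $\alpha\in[0,1]$: the worst case $\alpha=1$ gives $C^2M_A^2M_s^2$ uniformly, and this covers a detached, batch-dependent $\alpha$.
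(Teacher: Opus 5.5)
Your proof is correct and follows the paper's argument. Both bound the mixing factor $1-\alpha+\alpha\bar{w}(x)$ in $[0,\,1+\alpha(C-1)]$ using $\bar{w}(x)\in[0,C]$ and $\alpha\in[0,1]$, then combine with (A1)--(A2), square, and integrate; your explicit single-sample reading, with Jensen's inequality covering the literal expectation $\hat{g}(\alpha)$, is a useful clarification that the paper's one-line proof leaves implicit.
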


\begin{proof}
Since $\bar{w}(x) \in [0, C]$, the combined weight satisfies $0 \leq 1 - \alpha + \alpha\,\bar{w}(x) \leq 1 + \alpha(C-1)$. Squaring this bound and applying (A1) and (A2) completes the proof.
\end{proof}

\begin{proposition}[On-policy recovery]
\label{prop:consistency}
If $\pi_{\mathrm{rollout}} = \pi_{\mathrm{train}}$ almost everywhere, then $w(x) = \bar{w}(x) = 1$ almost surely. In this case, for any gating function $\alpha(x) \in [0,1]$, the AIS estimator yields $\hat{g} = g$.
\end{proposition}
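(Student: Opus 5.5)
The argument substitutes the hypothesis into the definitions, and nothing beyond (A3) and the definition of $g$ is needed. First, since $\pi_{\mathrm{rollout}} = \pi_{\mathrm{train}}$ almost everywhere, the ratio $w(x) = \pi_{\mathrm{train}}(x)/\pi_{\mathrm{rollout}}(x)$ equals $1$ for $\pi_{\mathrm{rollout}}$-almost every $x$ in the support. By (A3) the ratio is well defined there. The exceptional set where the densities differ or the denominator vanishes has $\pi_{\mathrm{rollout}}$-measure zero, so it does not affect any expectation $\mathbb{E}_{\mathrm{rollout}}[\cdot]$. Because $C \geq 1$, truncation is inactive at $w = 1$, so $\bar{w}(x) = \min(1, C) = 1$ almost surely.

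Next, for any measurable gate $\alpha(x) \in [0,1]$, the combined coefficient is
\[
1 - \alpha(x) + \alpha(x)\,\bar{w}(x) = 1 - \alpha(x) + \alpha(x) = 1
\]
almost surely, whatever value $\alpha(x)$ takes. The same holds for the batch-level $\alpha$ of Eq.~\eqref{eq:alpha-final}, which may depend on the sample through $\alpha_{\mathrm{ess}}, \alpha_{\mathrm{mis}}, \alpha_{\mathrm{var}}$. The weight multiplies to one pointwise, so no independence between $\alpha$ and $x$ is required. Hence the integrand of Eq.~\eqref{eq:ais-estimator} reduces to $A(x)\,s(x)$ almost surely, and $\hat{g} = \mathbb{E}_{\mathrm{rollout}}[A(x)\,s(x)]$.

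Finally, I would use (A1)--(A2) only to ensure that $A(x)s(x)$ is integrable, with $\|A s\| \le M_A M_s$. Equality of the two measures then gives $\mathbb{E}_{\mathrm{rollout}}[A s] = \mathbb{E}_{\mathrm{train}}[A s] = g$. Equivalently, $b_0 = 0$, so $\hat{g}_0 = \hat{g}_1 = g$ and every convex combination of them also equals $g$.

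I do not expect a real obstacle. The only point needing care is the measure-theoretic one: handling the null set where $w$ may be undefined and confirming that the gate $\alpha(x)$ is measurable, so that the expectations make sense. I would state the result as an almost-sure identity of integrands followed by equality of expectations.
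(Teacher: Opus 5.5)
Your proposal is correct and follows the same route as the paper's proof. Under the hypothesis $w = 1$ almost surely, so $\bar{w} = \min(1,C) = 1$ because $C \geq 1$; the factor $1-\alpha(x)+\alpha(x)\bar{w}(x)$ then equals $1$ pointwise for any gate, and equality of the two measures gives $\mathbb{E}_{\mathrm{rollout}}[A s] = \mathbb{E}_{\mathrm{train}}[A s] = g$. Your extra care with null sets, integrability via (A1)--(A2), and batch-dependent $\alpha$ goes beyond what the paper writes but does not change the argument.
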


\begin{proof}
Under $\pi_{\mathrm{rollout}} = \pi_{\mathrm{train}}$ we have $w(x) = 1$ a.s., hence $\bar{w}(x) = \min(1, C) = 1$ since $C \geq 1$. The mixing factor $1 - \alpha(x) + \alpha(x)\cdot 1 = 1$ for any $\alpha(x) \in [0, 1]$, so
\begin{equation}
    \hat{g} = \mathbb{E}_{\mathrm{rollout}}[A(x)\, s(x)] = \mathbb{E}_{\mathrm{train}}[A(x)\, s(x)] = g. \qedhere
\end{equation}
\end{proof}

\section{Additional Experimental Results}
\label{appendix:additional-results}
\subsection{Training Dynamics and Mechanism Analysis}
\label{sec:dynamics}

\begin{wrapfigure}{r}{0.5\textwidth}
  \centering
  \vspace{-12pt}
  \includegraphics[width=0.38\textwidth]{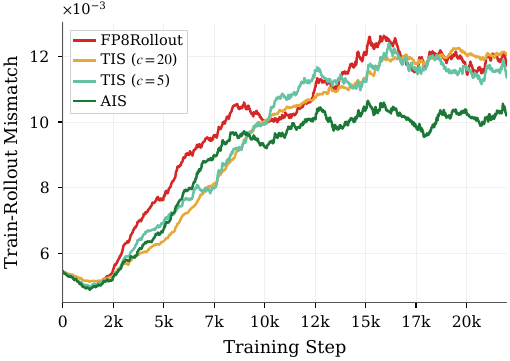}
  \caption{%
    Train--rollout mismatch on GSM8K across four configurations.
  }
  \label{fig:mismatch}
  \vspace{-10pt}
\end{wrapfigure}

To understand why AIS not only recovers but surpasses the BF16 baseline, we analyze the training dynamics through two diagnostic lenses.

\begin{table}[t]
\centering
\small
\setlength{\tabcolsep}{4pt}
\renewcommand{\arraystretch}{1.15}
\caption{LoRA fine-tuning performance on LLaDA-8B across reasoning benchmarks. }
\begin{tabular}{l l c c c c}
\toprule
\textbf{Method} & \textbf{Model}
& \textbf{GSM8K}
& \textbf{MATH500}
& \textbf{Countdown}
& \textbf{Sudoku} \\
\midrule
BF16 (Original) & LLaDA-8B & 74.10 & \textbf{33.70} & 38.20 & 23.89  \\
FP8 Rollout     & LLaDA-8B & 71.42 & 23.20 & 31.64 & 23.73 \\
FlashRL (TIS)   & LLaDA-8B & 72.10 & 33.40 & 35.16 & 24.16 \\
\textbf{AIS (Ours)} & LLaDA-8B & \textbf{74.37} & 33.60 & \textbf{41.41} & \textbf{24.50} \\
\bottomrule
\end{tabular}
\label{tab:main_benchmark_lora_llada}
\end{table}

\paragraph{Mismatch suppression.}
Figure~\ref{fig:mismatch} tracks the per-step train--rollout probability divergence on GSM8K. AIS (dark green) maintains the lowest mismatch throughout training, whereas TIS (light green) tracks the mismatch with increasing delay as training progresses. This is a direct consequence of AIS's adaptive coefficient: as divergence grows, both the necessity signal $\alpha_{\text{mis}}$ and the penalty $\alpha_{\text{var}}$ respond, tightening correction in step with the evolving mismatch. Static truncation, by contrast, has no mechanism to respond to drift.

\paragraph{Stability of reward trajectories.}
Figure~\ref{fig:reward_curves} shows training reward curves on Countdown and MATH. FP8 Rollout (red) exhibits oscillating trajectories characteristic of high-variance gradient estimation, consistent with the uncorrected bias in Eq.~\eqref{eq:oracle-alpha}. AIS and TIS both stabilize training, but AIS achieves higher terminal reward on Countdown with lower variance, indicating that adaptive correction converts the unstable regime of FP8 rollout into a productive learning signal.

\begin{figure}[h]
  \centering
  \includegraphics[width=0.5\linewidth]{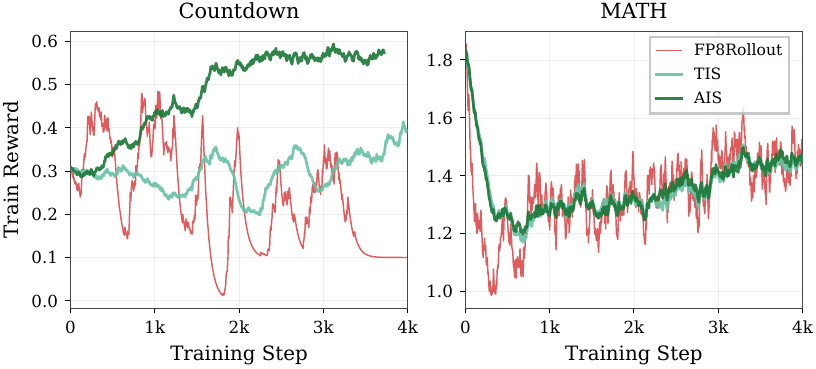}
  \caption{%
    Training reward on Countdown (left) and MATH (right).
    FP8 Rollout (red) exhibits pronounced instability. TIS (light green) and AIS (dark green) both stabilize training.
  }
  \label{fig:reward_curves}
\end{figure}

\paragraph{Quantization noise as implicit exploration.}
A recurring observation is that AIS \emph{exceeds} the BF16 upper bound on several benchmarks: $+6.63\%$ on AIME25 (Qwen3-8B) and $+2.40\%$ on MATH500 (Qwen3.5-9B) in Table~\ref{tab:ar-reasoning}, and $+9.50\%$ on Countdown at $128$ tokens (LLaDA-8B) in Table~\ref{tab:bf16-fp8-tis-ais}. We hypothesize that the stochastic perturbations introduced by FP8 quantization act as an \emph{implicit exploration bonus}, driving the policy into low-probability regions of the trajectory space. When these trajectories are appropriately re-weighted by AIS, the resulting gradient benefits from a broader exploration landscape while remaining approximately unbiased, echoing classical results on noise-aided non-convex optimization~\citep{ge2015escaping}. A formal characterization is left to future work.


\subsection{General-Capability Benchmarks for Autoregressive Models}
\label{appendix:ar-general}

Table~\ref{tab:ar-general} reports zero-shot accuracy on six general-capability benchmarks, serving as a sanity check that RL fine-tuning under reduced rollout precision does not cause regressions outside the target tasks. Naive FP8 rollout noticeably degrades accuracy on both Qwen3-8B and Qwen3.5-9B (e.g., ARC-C drops from $54.44$ to $51.52$, WinoGrande from $73.09$ to $70.69$ on Qwen3.5-9B). TIS partially recovers the lost accuracy but does not fully close the gap. AIS matches or exceeds the BF16 baseline on nearly all benchmarks while operating entirely under FP8 rollout, with the only deficit (WinoGrande on Qwen3.5-9B, $72.93$ vs.\ $73.09$) within evaluation noise.

\begin{table}[t]
\centering
\small
\caption{Evaluation accuracy (\%) on general-capability benchmarks for autoregressive models.}
\resizebox{\textwidth}{!}{%
\begin{tabular}{ll c cccccc}
\toprule
\textbf{Model} & \textbf{Method} & \textbf{Bitwidth} & \textbf{MMLU} & \textbf{PIQA} & \textbf{ARC-E} & \textbf{ARC-C} & \textbf{HellaSwag} & \textbf{WinoGrande} \\
\midrule
\multirow{4}{*}{Qwen3-8B}
  & RL (BF16)        & BF16 & 73.02 & 76.50 & 83.33 & 55.46 & 74.89 & 67.96 \\
\cmidrule(lr){2-9}
  & RL (FP8 Rollout) & FP8  & 72.92 & 76.82 & 83.59 & 55.80 & 74.95 & 67.88 \\
  & FlashRL (TIS)    & FP8  & 73.00 & 77.31 & 82.74 & 55.72 & 77.12 & 71.43 \\
  & AIS (Ours)       & FP8  & \textbf{73.34} & \textbf{77.75} & \textbf{83.54} & \textbf{55.97} & \textbf{77.44} & \textbf{73.48} \\
\midrule
\multirow{4}{*}{Qwen3.5-9B}
  & RL (BF16)        & BF16 & 78.61 & 79.27 & 81.14 & 54.44 & 78.06 & 73.09 \\
\cmidrule(lr){2-9}
  & RL (FP8 Rollout) & FP8  & 77.61 & 78.00 & 80.44 & 51.52 & 75.02 & 70.69 \\
  & FlashRL (TIS)    & FP8  & 77.69 & 78.78 & 81.73 & 54.52 & 77.12 & 71.24 \\
  & AIS (Ours)       & FP8  & \textbf{78.65} & \textbf{79.22} & \textbf{81.57} & \textbf{54.61} & \textbf{78.16} & 72.93 \\
\bottomrule
\end{tabular}%
}
\label{tab:ar-general}
\end{table}

\subsection{BF16 vs.\ FP8 Rollout Benchmark on LLaDA-8B-Instruct}
\label{appendix:bf16-fp8-benchmark}

\begin{table*}[t]
\centering
\scriptsize
\begin{threeparttable}
\caption{BF16 vs.\ FP8 Rollout benchmark on LLaDA-8B-Instruct (max length = 256). Results averaged over per-step timings; speedup computed as BF16/FP8.}
\label{tab:bf16-fp8-benchmark}
\begin{tabular}{ccccccccccc}
\toprule
\textbf{BS} & \textbf{G} &
\textbf{Generate (s)} & \textbf{Spd} &
\textbf{Rollout (s)} & \textbf{Spd} &
\textbf{Ref (s)} & \textbf{Spd} &
\textbf{Total (s)} & \textbf{Spd} &
\textbf{Peak Mem (GB)} \\
\midrule
4  & 4  & 46.96$\rightarrow$16.57 & 2.83$\times$ & 0.20$\rightarrow$0.19 & 1.07$\times$ & 0.20$\rightarrow$0.24 & 0.82$\times$ & 47.56$\rightarrow$17.23 & 2.76$\times$ & 50.94$\rightarrow$53.98 \\
8  & 4  & 39.03$\rightarrow$16.13 & 2.42$\times$ & 0.41$\rightarrow$0.36 & 1.12$\times$ & 0.43$\rightarrow$0.37 & 1.14$\times$ & 40.27$\rightarrow$17.04 & 2.36$\times$ & 61.07$\rightarrow$62.41 \\
8  & 8  & 28.20$\rightarrow$17.76 & 1.59$\times$ & 0.40$\rightarrow$0.36 & 1.11$\times$ & 0.42$\rightarrow$0.38 & 1.12$\times$ & 29.41$\rightarrow$18.76 & 1.57$\times$ & 60.63$\rightarrow$62.02 \\
16 & 4  & 50.54$\rightarrow$43.41 & 1.16$\times$ & 0.86$\rightarrow$0.75 & 1.15$\times$ & 0.86$\rightarrow$0.85 & 1.02$\times$ & 53.11$\rightarrow$45.96 & 1.16$\times$ & 62.85$\rightarrow$63.88 \\
16 & 8  & 47.72$\rightarrow$34.87 & 1.37$\times$ & 0.84$\rightarrow$0.71 & 1.18$\times$ & 0.85$\rightarrow$1.02 & 0.83$\times$ & 50.23$\rightarrow$37.17 & 1.35$\times$ & 61.07$\rightarrow$62.41 \\
\bottomrule
\end{tabular}

\end{threeparttable}
\end{table*}

Table~\ref{tab:bf16-fp8-benchmark} reports wall-clock and memory measurements on LLaDA-8B-Instruct at a fixed maximum length of $256$ tokens, decomposing per-step latency into \emph{Generate} (denoising), \emph{Rollout} (sampling-side bookkeeping), and \emph{Ref} (reference log-probability recomputation). Three observations stand out. \emph{(i)} Generation accounts for over $95\%$ of total step time, so rollout-side acceleration directly translates into end-to-end speedup. \emph{(ii)} FP8 delivers the largest gains at small batch sizes ($2.76\times$ end-to-end at BS $= 4$), where decoding is memory-bandwidth bound, and the gains shrink as compute saturates at larger batches ($1.16$--$1.37\times$ at BS $= 16$). \emph{(iii)} FP8 enables configurations infeasible under BF16 (e.g., BS $= 16$ with G $= 8$ runs under FP8 but OOMs under BF16), while adding only $1$--$3$\ GB of peak memory since the learner, optimizer state, and gradients remain in BF16. Combined with the accuracy results in Table~\ref{tab:ar-general}, these measurements confirm that FP8 rollout delivers substantial acceleration at near-zero memory cost, with the precision penalty fully absorbed by AIS.

\section{Limitations}
\label{sec:limitations}

We acknowledge several limitations. First, our experiments focus on $8$--$9$B-parameter models in the BF16-trainer/FP8-rollout setup, and scaling to larger models or more aggressive quantization (e.g., FP4, INT4) remains an open question that may require re-tuning $C$, $\delta$, $\gamma$. Second, we evaluate on mathematical reasoning and planning, leaving instruction-following, dialogue, code generation, and RLHF-style preference learning unexplored. Third, the oracle coefficient in Proposition~\ref{prop:oracle} relies on simplifying assumptions (negligible truncation bias and dominant clipped IS variance) that may not hold under heavier quantization or smaller batches. Fourth, our exploration-bonus claim is supported empirically rather than by a formal causal analysis, and isolating exploration from other mechanisms (gradient regularization, implicit smoothing) is left to future work. Finally, compute constraints limit each configuration to a single seed, and a study of run-to-run variance would further strengthen the empirical claims.